\documentclass[letterpaper,10pt,conference]{ieeeconf}

\IEEEoverridecommandlockouts
\newif\iffacetfull
\ifdefined\FACETFULL
  \facetfulltrue
\else
  \facetfullfalse
\fi

\usepackage{amsmath,amssymb}
\usepackage{booktabs}
\usepackage{graphicx}
\usepackage{flafter}
\usepackage{placeins}
\usepackage{needspace}
\usepackage{xcolor}
\usepackage{algorithm}
\usepackage{algpseudocode}
\usepackage[hidelinks]{hyperref}
\hypersetup{pdftitle={SCET-CBF: Speed-Calibrated Soft Guidance for One-Step Dynamic Navigation},pdfauthor={Xiaoyu Han and Dongkun Han}}
\makeatletter
\def\abstract{\normalfont%
  \if@twocolumn%
    \@IEEEabskeysecsize\bfseries\textit{Abstract}:\,%
  \else%
    \begin{center}\vspace{-1.78ex}\@IEEEabskeysecsize\textbf{Abstract}\end{center}%
    \quotation\@IEEEabskeysecsize%
  \fi\@IEEEgobbleleadPARNLSP}
\makeatother
\iffacetfull
  \usepackage{capt-of}
\fi

\newtheorem{theorem}{Theorem}
\newtheorem{proposition}{Proposition}

\newtheorem{assumption}{Assumption}
\newcommand{\CapRestricted}{49.5}

\newcommand{\Uh}{\mathcal{U}_{\mathrm{hard}}}
\newcommand{\Bs}{\mathcal{B}_{\mathrm{safe}}}

\newcommand{\clip}{\operatorname{clip}}

\newcommand{\betamax}{\beta_{\max}}
\newcommand{\dt}{\Delta t}

\newcommand{\fulldoi}[1]{\unskip\iffacetfull, doi: #1\fi}

\title{\LARGE \bf SCET-CBF: Speed-Calibrated Soft Guidance\\for One-Step Dynamic Navigation}
\author{Xiaoyu Han$^{1}$ and Dongkun Han$^{2}$%
\thanks{$^{1}$Xiaoyu Han is with the Department of Physics and Astronomy,
University College London, London WC1E 6BT, United Kingdom.
\texttt{xiaoyu.han.24@ucl.ac.uk}}%
\thanks{$^{2}$Dongkun Han is with the Department of Mechanical and Automation
Engineering, The Chinese University of Hong Kong, Shatin, Hong Kong SAR, China.
\texttt{dkhan@mae.cuhk.edu.hk}}}
\begin{document}
\maketitle\thispagestyle{empty}\pagestyle{empty}

\begin{abstract}
Control barrier functions (CBFs) support collision avoidance, yet dense
dynamic encounters can leave a vehicle with no feasible command.
Geometric refinements and soft constraints help preserve motion choices.
Using these choices also requires accounting for the vehicle's changing
turning response. For a steering-limited bicycle, braking can lower
closing speed while slowing the lateral motion needed before an obstacle
arrives. We propose SCET-CBF, a speed-calibrated soft guidance law for
one-step dynamic navigation. A short lateral escape maneuver gives an
allowable approach speed. Its speed derivative allows a quadratic program
to weigh both effects of braking when choosing acceleration and steering.
Soft penalties accommodate competing obstacle preferences within the
input set permitted by segment and braking constraints. We derive the
guidance and establish protection over each accepted motion segment,
together with conditional braking invariance for a single obstacle.
On 2500 paired tasks with a 60 s deadline, SCET achieves 72.8\% completion,
compared with 49.8\% for CBF/velocity-obstacle control and 69.8\% for an
enhanced 1 s CBF-MPC. At 100 obstacles, SCET and the enhanced MPC complete
52.4\% and 33.6\% of tasks. Paired studies identify the contributions of
speed feedback and early activation. Further evaluations examine the
operating range under lateral acceleration limits and bounded motion
and execution errors.
\end{abstract}
\noindent{\footnotesize Code and reproduction materials:\par
\url{https://anonymous.4open.science/r/SCET-CBF-0C0F/}}\par

\begin{figure}[t]\centering
\includegraphics[width=\columnwidth]{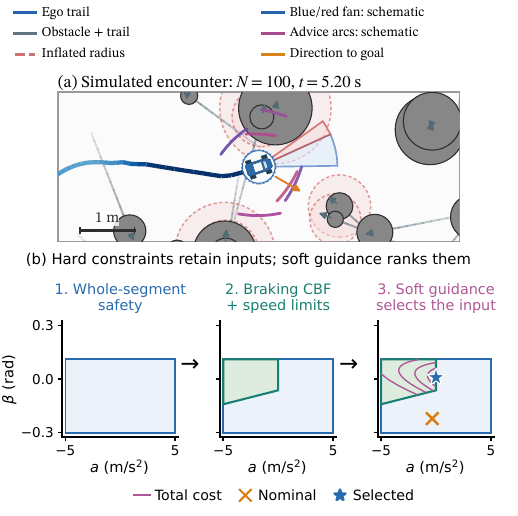}
\caption{From available commands to continued passage.
(a) The fan and arcs illustrate the car's steering choices and
turning-based preference. (b) Segment geometry gives the blue input domain;
speed and braking retain the green set. Soft guidance selects the star
within it. The orange cross is the goal-seeking input.}
\label{fig:mechanism}
\end{figure}

\section{Introduction}
Dense moving obstacles turn navigation into coupled decisions. Clearing
one encounter can leave little room for the next. Velocity obstacles
(VOs) identify dangerous relative
motion, while control barrier functions (CBFs) express safety conditions
as input constraints~\cite{velocityobstacle,mavo}. With bounded controls,
competing constraints can exhaust the available commands before goal entry.

Recent advances preserve more choices. Multi-agent CBF/VO control (MAVO)
allows VO guidance to yield while retaining hard braking
constraints~\cite{mavo}. Dynamic parabolic CBF (DPCBF) adjusts its boundary
with distance and relative speed to reduce collision-cone
restriction~\cite{dpcbf}. The question is now how to use the remaining
choices to sustain passage through successive encounters.

A steering-limited bicycle makes this question pressing. Lateral motion
requires forward speed, and moving obstacles impose a time window for
evasion. Braking can lower closing speed while reducing yaw rate and
lateral response, which scale with $v$ and $v^2$. Rebuilding speed takes
time under bounded acceleration. Slowing for one obstacle can therefore
leave too little response for the next. Figure~\ref{fig:allowance}a
illustrates this time-dependent response.

We develop SCET-CBF (Speed-Calibrated guidance within Exact Turning sets
with a control barrier function) for this task. E denotes \emph{Exact}:
the complete segment-safe steering set under the sampled model.
Figure~\ref{fig:mechanism} shows segment checking, speed/braking constraints,
and soft selection through a quadratic program (QP).

Our main contribution is an \emph{approach allowance} derived from a short
bicycle escape model. It quantifies the accommodated closing speed.
The QP retains its speed derivative to weigh both effects of braking.
Building on MAVO's soft/hard separation~\cite{mavo},
we use this allowance as relaxable obstacle guidance within the segment
and braking constraints. Competing preferences can yield while preserving
all hard-feasible commands. Early activation gives turning time to develop.

Segment protection and a bicycle braking bound support the design.
Paired studies show passage gains and millisecond computation, and
examine guidance mechanisms and robustness.
Outcomes are goal entry (\emph{success}), disc overlap (\emph{contact}),
no executable command (\emph{exhaustion}), or deadline expiry (\emph{timeout}).
Sections II to V present the model, method, analysis and evaluation.
\section{Model and Design Context}
\subsection{Vehicle and execution model}
For position $p=(x,y)$, heading $\theta$, speed $v$, and input
$u=(a,\beta)$, the relaxed bicycle obeys
\begin{align}
\dot x&=v\cos\theta-v\sin\theta\,\beta,&
\dot y&=v\sin\theta+v\cos\theta\,\beta,\nonumber\\
\dot\theta&=v\beta/\ell_r,&\dot v&=a.\label{eq:plant}
\end{align}
We use $\dt=0.05$ s Euler updates and linear motion between successive
positions. Steering $\beta$ determines the current displacement;
acceleration $a$ changes the next speed. Bounds are $|a|\le5$ m/s$^2$, $|\beta|\le0.302825$ rad,
$v\in[0.2,3.5]$ m/s, and $\ell_r=\ell_f=0.2$ m.
The DPCBF benchmark's positive speed floor excludes parking. The input $\beta$ is a directly commanded sideslip surrogate;
Fig.~\ref{fig:bicycle} relates it to the physical wheel angle.
Acceleration and steering act simultaneously within independent
bounds~\cite{polack2017bicycle}. In this steering-limited model, speed
determines how much lateral motion develops during an encounter.
The spatial curvature at fixed steering is independent of speed.
Section~V examines how steady-turn limits, obstacle prediction errors
and actuator response affect navigation.

\begin{figure}[t]\centering
\includegraphics[width=.70\columnwidth]{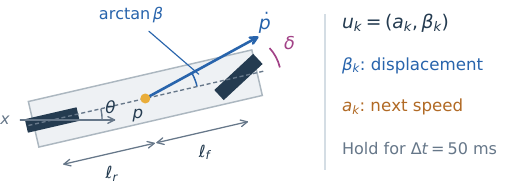}
\caption{The vehicle behind the guidance model. Wheel angle $\delta$
maps to surrogate $\beta$; velocity points along
$\theta+\arctan\beta$ in~\eqref{eq:plant}.}
\label{fig:bicycle}
\end{figure}

Obstacle $j$ has position $p_j$, velocity $w_j$, and radius $r_j$; the
vehicle radius is 0.3 m. SCET, MAVO, rFF-CBF, and DPCBF use the
same bounded proportional goal controller
for their nominal input $u_{\rm nom}$.
With goal distance $d_g$ and wrapped bearing error $e_g$, set
$\bar d_g=\max(d_g-0.05,0.05)$ m and
$v_d=\clip(0.5\bar d_g\max(\cos e_g,0),v_{\min},v_{\max})$.
The nominal acceleration is $\operatorname{sat}_{5}(1.5(v_d-v))$;
nominal steering is
$\arctan[\tfrac12\tan(\operatorname{sat}_{32^\circ}(0.5e_g))]$,
where $\operatorname{sat}_b$ clips to $[-b,b]$ and gains use SI units.

\subsection{Design origins and closest methods}
VOs describe collision geometry in relative velocity
space~\cite{velocityobstacle}. MAVO uses this geometry as a soft objective
and a hard braking CBF for protection~\cite{mavo}. Its braking argument
uses double-integrator dynamics; it also evaluates a car, with MATLAB
implementations for both. DPCBF adapts a hard parabolic boundary with
distance and relative speed~\cite{dpcbf}. SCET builds on these designs
with a soft approach allowance derived from a bicycle maneuver.

Recent SC3BF~\cite{sc3bf} shifts a hard collision-cone CBF with a
distance- and speed-dependent allowance and provides bounded-input
feasibility and invariance conditions for a continuous-time single-obstacle
bicycle model. SCET derives its allowance and activation time from a short
bicycle escape maneuver, retains the allowance's speed derivative, and uses
the result as soft guidance among multiple obstacles. Exact sampled-segment
checking and braking constraints supply its protective layer; conflicting
guidance preferences can relax.

Turning-circle CBFs bring maneuverability into MPC~\cite{turningcircle}.
Their QP variant retains the speed-dependent radius derivative for
independent yaw control~\cite{turningcircleqp}. Future-focused CBFs predict
closest approach~\cite{ffcbf}, and clearance curves link speed to evasive
motion~\cite{clearance1998}. SCET expresses maneuver information as a
scalar allowance and tests its speed sensitivity in a soft QP.
Positive-speed VO/hybrid CBFs~\cite{positivevo,positivehybrid},
sampled/segment-safe CBFs~\cite{sampledcbf,segmentsafecbf}, and predictive
CBFs~\cite{dcbfmpc} complement this design through motion constraints,
execution protection, and future planning.

\section{The SCET-CBF Update}
Segment geometry and braking first define the available input set.
Soft guidance then ranks its choices using the vehicle's turning response
(Fig.~\ref{fig:mechanism}b and Algorithm~\ref{alg:scet}).

\subsection{Step 1: steering commands that clear the next segment}
The car and each obstacle move between updates. \emph{Segment safety}
checks their separation throughout that motion. For $e_\theta=(\cos\theta,\sin\theta)$ and
$n_\theta=(-\sin\theta,\cos\theta)$, define relative endpoints and clearance
\begin{align}
r_j^0&=p_j-p,\nonumber\\
r_j^1(\beta)&=r_j^0+\dt(w_j-ve_\theta-vn_\theta\beta),\nonumber\\
g_j(\beta)&=\min_{0\le\lambda\le1}\|r_j^0+\lambda(r_j^1-r_j^0)\|^2-(R_j^c)^2.
\label{eq:sampled-clearance}
\end{align}
The common time $\lambda$ synchronizes the motions. The command set is
$\Bs^0=\{\beta\in[-\betamax,\betamax]:g_j(\beta)\ge0,\ \forall j\}$.

To compute the set, write $d(\beta)=r_j^1-r_j^0$,
$A=d^\top d$, $B=(r_j^0)^\top d$, and
$C=\|r_j^0\|^2-(R_j^c)^2$. For $A>0$, the closest point occurs at
$\lambda^*=\operatorname{clip}(-B/A,0,1)$; for $A=0$, set $\lambda^*=0$.
Then $g_j=C+2\lambda^*B+(\lambda^*)^2A$.
Branch or safety changes occur only at roots of
$A,B,A+B,A+2B+C,AC-B^2$, all at most quadratic in $\beta$.
Sorting these roots and testing each intervening interval and boundary
therefore recovers the entire continuous steering domain for optimization.

For a bounded constant steering error $e_\beta$, erode every component
$[\underline\beta_m,\overline\beta_m]$ to
$[\underline\beta_m+e_\beta,\overline\beta_m-e_\beta]$, discarding empty
intervals. Inflate obstacle radii by a bound $\rho_j$ on relative path
error over the whole sample. Initially $R_j^c=1.05(r_{\rm ego}+r_j)+\rho_j$.
If the set is empty, remove the optional 5\% buffer and retry with
$R_j^c=r_{\rm ego}+r_j+\rho_j+10^{-4}$ m. The same physical-radius cushion
is checked on every return. Braking and guidance retain the 5\% radius.

\begin{assumption}[Current coverage and execution]
Every obstacle within 15 m is observed with the stated error bounds, and
obstacle speed is bounded by $\bar v_o$ (1.2 m/s in this benchmark). Every obstacle within
$R_j^c+\dt(v\sqrt{1+\betamax^2}+\bar v_o)$ is included independently of
rank. This reach bound is below the sensing range. The actual ego and
obstacle paths obey the error bounds used to construct $\rho_j$.\hfill$\square$
\end{assumption}

\subsection{Step 2: speed limits and braking margin}
Braking reduces radial approach before separation is spent. Its margin
jointly constrains acceleration and steering within the steering set.
Let $d_j=\|p_j-p\|$, $e_j=(p_j-p)/d_j$, and
$n_j=(-e_{j,y},e_{j,x})$. Define radial/tangential relative speeds
$x_j=e_j^\top(w_j-ve_\theta)$, $y_j=n_j^\top(w_j-ve_\theta)$, and
$c_j=[-x_j]_+$. With $r_j^c=1.05(r_{\rm ego}+r_j)$, use MAVO's braking form
\begin{equation}
h_j^B=d_j-r_j^c-c_j^2/(2a_B),\qquad a_B=0.055\ \mathrm{m/s^2}.
\label{eq:bicycle-cbf}
\end{equation}
The margin compares separation with the distance needed to reduce radial
approach. Theorem~\ref{thm:pairwise} derives $a_B$ as a conservative lower
bound on combined radial control capability. The longitudinal braking
limit remains 5 m/s$^2$. Using relative velocity covers a constant-velocity
moving obstacle, including a static obstacle as a special case.

For $x_j<0$, let $\cos\vartheta_j=e_j^\top e_\theta$ and
$\sin\vartheta_j=n_j^\top e_\theta$. Differentiating the full bicycle gives
\begin{align}
L_fh_j^B&=-c_j+(c_j/a_B)y_j^2/d_j,\nonumber\\
C_{j,a}&=-(c_j/a_B)\cos\vartheta_j,\nonumber\\
C_{j,\beta}&=v\sin\vartheta_j+\frac{c_jv}{a_B}
\left[\frac{v}{\ell_r}\sin\vartheta_j-\frac{y_j}{d_j}\cos\vartheta_j\right].
\label{eq:brake-row}
\end{align}
For $x_j\ge0$, use $L_fh_j^B=x_j$, $C_{j,a}=0$,
$C_{j,\beta}=v\sin\vartheta_j$. The CBF condition~\cite{ames2017cbfqp} gives
\begin{equation}
C_{j,a}a+C_{j,\beta}\beta+L_fh_j^B+\alpha_Bh_j^B\ge0,
\quad\alpha_B=0.005\ \mathrm{s^{-1}}.\label{eq:hard}
\end{equation}
Among the processed obstacles, enforce all rows satisfying
$h_j^B\le H_B+\eta$, where $H_B=\min_jh_j^B$ and $\eta=0.02$ m.
Thus nearly tied worst margins are protected together. Processing includes
the nearest ten obstacles within 15 m and those with $d_j<r_j^*$, where
$r_j^*=r_j^c+\Delta v^2/(2a_{\max})+\bar v_o\Delta v/a_{\max}
+\dt(v+\bar v_o)$ and $\Delta v=[v-v_{\min}]_+$.
This deceleration-based selection rule supplements Assumption~1.

For each steering interval, intersect~\eqref{eq:hard} with
\begin{equation}
\mathcal A(v)=[-a_{\max},a_{\max}]\cap
[(v_{\min}-v)/\dt,(v_{\max}-v)/\dt].\label{eq:speed-input}
\end{equation}
The resulting intersections form the hard input set $\Uh$.
When a braking margin is negative, its row requests recovery.
Section~IV establishes the pairwise invariance conditions.

\subsection{Step 3: selecting for continued maneuverability}
Guidance asks how much approach the car can accommodate while turning.
Figure~\ref{fig:allowance} links the maneuver allowance, its speed
sensitivity, and the time needed for the turn to develop.

\paragraph{From turning capacity to approach allowance}
Following the clearance-curve link between speed and
avoidance~\cite{clearance1998}, maximum steering gives yaw rate
$\omega=v\betamax/\ell_r$.
A quarter turn takes $t_q=\pi/(2\omega)$. Integrating~\eqref{eq:plant}
gives lateral displacement $\ell_r/\betamax+\ell_r$: the first term
comes from heading rotation, and the second from direct sideslip.
We use the heading-rotation contribution to set the proxy scale.
Matching this displacement to $a_{\rm eff}t_q^2/2$ gives
\begin{equation}
a_{\rm eff}(v)=\frac{8}{\pi^2}\frac{v^2\betamax}{\ell_r}.
\label{eq:aeff}
\end{equation}
The resulting frontal escape is $x(t)=\rho-st$, $y(t)=a_\perp t^2/2$, where $\rho$
is initial center separation, $s$ is approach speed, and $a_\perp$ is
constant lateral acceleration. Tangency to avoidance radius $r$ gives
\begin{align}
x_c&=(\sqrt{\rho^2+8r^2}-\rho)/2,\nonumber\\
s_0(\rho,r,a_\perp)&=\frac{\rho-x_c}{\sqrt{2\sqrt{r^2-x_c^2}/a_\perp}},\qquad \rho>r>0.
\label{eq:vertex}
\end{align}
Indeed, the contact and stationarity equations imply
$x_c^2+\rho x_c-2r^2=0$. The squared distance has
$F''(t)=2s^2+3a_\perp^2t^2>0$, so this tangency is the unique minimum. 

\begin{figure}[t]\centering
\includegraphics[width=\columnwidth]{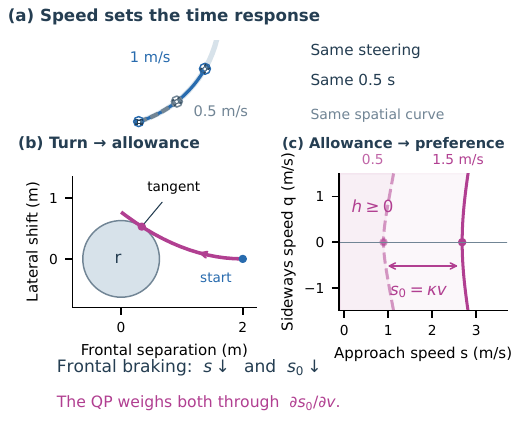}
\caption{How turning response becomes a control preference.
(a) Equal steering follows the same continuous-model curve, with different
speeds reaching different positions after 0.5 s. (b) A tangential escape
sets the proxy allowance $s_0$. (c) Speed moves the soft $h=0$ boundary in
relative-velocity coordinates ($d=2$ m, $r^c=0.63$ m). Braking changes both
approach and allowance; the QP accounts for both changes.}
\label{fig:allowance}
\end{figure}

To include nonfrontal encounters, let $s_j=-x_j$ be approach speed,
$q_j=y_j$ sideways relative speed, and $V_j=\|w_j-ve_\theta\|$. For
$d_j>1.02r_j^c$, use
\begin{equation}
h_j=s_0(d_j,r_j^c,a_{\rm eff})+
\kappa_q\sqrt{d_j^2-(r_j^c)^2}\frac{q_j^2}{\max(V_j,10^{-6})}-s_j.
\label{eq:guidance}
\end{equation}
The second term credits sideways relative motion already helping the
encounter. Positive $h_j$ means the approach lies within the combined
allowance.
\paragraph{How speed changes the preference}
Braking changes both approach speed and the turning allowance.
The QP needs both changes to judge how slowing affects the preference
(Fig.~\ref{fig:allowance}c). At fixed distance and radius,
$s_0=\kappa(d_j,r_j^c)v$ for $v>v_{\min}$, so
$\partial s_0/\partial v=s_0/v$.

For a stationary frontal obstacle, $q=0$ and $h=(\kappa-1)v$;
its acceleration coefficient is $\kappa-1$.
At $d=2$ m and $r^c=0.63$ m, $\kappa=1.786$ gives $+0.786$:
the allowance grows faster than the approach speed during acceleration.
At $d=0.8$ m, $\kappa\approx0.347$ reverses the preference.
Thus guidance can favor acceleration during approach when the hard set
permits it, and favor slowing as the encounter tightens.

\paragraph{When guidance acts}
Turning takes time. The soft row starts acting before the current
approach preference is lost.
Let $A_j=L_gh_j$ be the margin's input sensitivity
and $L_fh_j$ its drift. With $\alpha=10$ s$^{-1}$ and permitted violation $\xi_j\ge0$,
the soft row is
\begin{align}
&A_ju+L_fh_j+\alpha(h_j-\varepsilon_j)+\xi_j\ge0,\nonumber\\
&\varepsilon_j=\dt\bigl(|L_fh_j|+|A_{j,a}|a_{\max}
+|A_{j,\beta}|\betamax\bigr)\nonumber\\
&\quad+[-L_fh_j]_+T_j^{\rm adv}.\label{eq:advice-row}
\end{align}
The first term of $\varepsilon_j$ estimates one update of margin change
from local derivatives. The second brings guidance forward when the
margin is decreasing. We use the physical maneuver time
$T_j^{\rm adv}=t_j^{\rm turn}=\sqrt{2r_j^c/a_{\rm eff}}$.
Slower turning gives an earlier warning. Geometry and speed determine
this advance; the task deadline specifies the evaluation duration.

Approaching obstacles with $c_j>10^{-3}$ m/s receive time-to-collision
weight $\omega_j=2c_j/d_j$. 

The maneuver model sets the allowance and turning time. The fixed choices
$\kappa_q=0.1$ m$^{-1}$, $\alpha=10$ s$^{-1}$, and the multiplier 2 in
$\omega_j$ set lateral credit, activation rate, and guidance strength.
The 60 s study uses these same parameters. Soft weights set the relative
importance of preferences over the hard input set.

\paragraph{Numerical evaluation}
At
$d_j\le1.02r_j^c$ the implementation uses $h_j=-1$ m/s to avoid invalid
tangency queries. A shared central-difference routine evaluates all guidance variants and
branches with step $2\times10^{-5}$ in SI units. The speed passed to
$a_{\rm eff}$ is $\max(v,v_{\min})$; the switch at $1.02r_j^c$ is unsmoothed.
Section~V-F discusses these definitions at the speed floor and near obstacles.

\begin{algorithm}[t]
\caption{Default SCET-CBF update}\label{alg:scet}
\small\begin{algorithmic}[1]
\Require Ego state, covered obstacle observations, goal, error bounds
\State Form $u_{\rm nom}$ and $\mathcal A(v)$.
\State Compute segment-safe steering intervals; inflate for path errors and erode for steering error.
\State If empty, retry without the optional 5\% buffer, retaining errors and the physical cushion.
\If{no interval remains}\State \Return sampled-set exhaustion\EndIf
\State Form near-minimum braking rows and independent soft rows.
\For{each steering interval}
\State Intersect with speed and braking constraints to obtain $P_m$.
\State If nonempty, solve~\eqref{eq:qp}; on solve failure project the nominal input and validate it.
\EndFor
\State Return the lowest-cost validated candidate after bounds, hard-row and segment checks; otherwise report exhaustion or numerical failure.
\end{algorithmic}
\end{algorithm}

\subsection{Soft guidance within the hard input set}
Different encounters can call for conflicting responses. Independent
slacks let the QP balance these preferences with progress toward the goal.
On each
nonempty component $P_m$ of $\Uh$, solve
\begin{equation}
\min_{u\in P_m,\,\xi\ge0}
\|D_u(u-u_{\rm nom})\|^2+\sum_j\omega_j\xi_j^2,
\quad A_ju+b_j+\xi_j\ge0,\label{eq:qp}
\end{equation}
where $D_u=\operatorname{diag}(a_{\max}^{-1},\betamax^{-1})$ and
$b_j=L_fh_j+\alpha(h_j-\varepsilon_j)$. 

For any $u\in\Uh$, choosing $\xi_j=[-A_ju-b_j]_+$ satisfies each soft row.
Every hard input therefore remains available. Its slack cost measures
how strongly the corresponding maneuver preference opposes it.

Eliminating the slacks makes this mechanism explicit. At the current
state, the equivalent objective and its gradient are
\begin{align}
J(u)&=\|D_u(u-u_{\rm nom})\|^2+\sum_j\omega_j[-A_ju-b_j]_+^2,\nonumber\\
\nabla J(u)&=2D_u^\top D_u(u-u_{\rm nom})
-2\sum_j\omega_j\xi_j A_j^\top.\label{eq:soft-gradient}
\end{align}
A violated preference pulls the input along $A_j$; its violation and
urgency set the strength. A satisfied preference adds no correction.
These pulls form the magenta contours in Fig.~\ref{fig:mechanism}b.
Speed feedback changes their acceleration component even at the same
margin value. Figure~\ref{fig:guidance} follows this change into the car's motion.

Each $P_m$ is a convex polygon. Comparing its optimized cost with those
of all other nonempty components selects the best input over the union.

\section{Analysis of the Hard Layer}
\subsection{Protection of the current segment}
\begin{theorem}[Accepted segment]\label{thm:sample}
Under Assumption~1, a returned command remains outside the physical
obstacle discs throughout the current sample, provided its final check
includes the stated error bounds and positive physical cushion.
\end{theorem}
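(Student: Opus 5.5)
The argument compares the executed relative motion with the nominal sampled segment of~\eqref{eq:sampled-clearance} and then uses the inflation by $\rho_j$, the erosion by $e_\beta$, and the physical cushion to cover their difference. First we fix the obstacles to consider. Under Assumption~1, any obstacle that could reach the physical disc during the sample starts within $R_j^c+\dt(v\sqrt{1+\betamax^2}+\bar v_o)$. The reason is that the ego speed along the relaxed model is $v\sqrt{1+\beta^2}\le v\sqrt{1+\betamax^2}$ and obstacle speed is at most $\bar v_o$. Every such obstacle is therefore included in the final check regardless of rank. Any obstacle outside this reach cannot close the remaining gap within $\dt$ by the triangle inequality, so it remains outside its disc.

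For each included obstacle, the returned $\beta$ passed the final segment check. Hence $g_j(\beta)\ge0$ holds with $R_j^c\ge r_{\rm ego}+r_j+\rho_j+10^{-4}$, and this covers both the buffered and the fallback radius. Because the root-based enumeration is exact, $g_j\ge0$ holds on the entire interval component, not only at sampled points. The nominal relative path $r_j^0+\lambda(r_j^1(\beta)-r_j^0)$ therefore stays at distance at least $R_j^c$ for all $\lambda\in[0,1]$.

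Next we handle the steering error. The command was taken from an eroded interval $[\underline\beta_m+e_\beta,\overline\beta_m-e_\beta]$, so any executed $\beta'$ with $|\beta'-\beta|\le e_\beta$ lies in the original safe component. We then apply the previous paragraph with $\beta'$ in place of $\beta$.

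The remaining step, and the main obstacle, is matching the actual continuous-time paths to the synchronized linear segment. The plan is to use Assumption~1's last clause literally: $\rho_j$ bounds the relative path error over the whole sample. This error includes Euler and linearization discrepancy, unmodeled curvature within $\dt$, and obstacle prediction error. So at every time $t=\lambda\dt$ the actual relative position satisfies $\|r_j^{\rm act}(t)-(r_j^0+\lambda d)\|\le\rho_j$. By the triangle inequality, $\|r_j^{\rm act}(t)\|\ge R_j^c-\rho_j\ge r_{\rm ego}+r_j+10^{-4}>r_{\rm ego}+r_j$, which is strict separation of the physical discs throughout the sample. The delicate point is that the common parameter $\lambda$ must correspond to physical time for both bodies, so the error bound has to be stated against this synchronized parametrization. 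I would state that interpretation of $\rho_j$ explicitly, and also note that the $\beta$ erosion and the $\rho_j$ inflation are applied to the same check without double counting.
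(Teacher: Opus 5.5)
Your proposal is correct and follows the same route as the paper's proof: the whole-segment check of~\eqref{eq:sampled-clearance}, erosion to cover steering error, the $\rho_j$ inflation together with the positive physical cushion, and Assumption~1's reach bound to exclude contact with omitted obstacles. You spell out each step the paper states in one line, including the triangle inequality against the synchronized segment, and your version needs only one small addition: in the reach-bound step, also subtract the path error $\rho_j$, which is harmless because $R_j^c$ already contains it.
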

\noindent\emph{Proof.} Equation~\eqref{eq:sampled-clearance} checks the entire
relative segment. Erosion covers steering error, and the path-error
inflation ensures physical clearance. Assumption~1's reach bound excludes
contact with omitted obstacles. Concatenating accepted segments preserves
clearance. $\square$

\needspace{9\baselineskip}
\begin{proposition}[Complete current command set]\label{prop:complete}
With exact arithmetic, zero path and steering errors, and
$\mathcal A(v)\ne\varnothing$, the returned interval union is empty exactly
when no held input clears the declared radius throughout the sample.
\end{proposition}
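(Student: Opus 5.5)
The plan is to reduce the claim to two facts. First, with zero errors, the returned union equals $\Bs^0$ computed at the last radius tried, or is empty only if that set is empty. Second, the held inputs $u=(a,\beta)$ that clear the declared radius over the sample are exactly the pairs with $\beta\in\Bs^0$ and $a\in\mathcal A(v)$. The second fact follows from the execution model. Within the current sample the displacement depends only on $\beta$, since $r_j^1(\beta)$ in~\eqref{eq:sampled-clearance} contains $v$ and $\beta$ but not $a$, while $a$ only changes the next speed. So a held input $(a,\beta)$ clears every obstacle throughout the sample iff $g_j(\beta)\ge0$ for all $j$. With $\mathcal A(v)\ne\varnothing$, some admissible $a$ exists for every such $\beta$. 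Hence the set of clearing held inputs is $\mathcal A(v)\times\Bs^0$, and it is empty iff $\Bs^0=\varnothing$. Because the errors are zero, there is no erosion and $\rho_j=0$, so the declared radius is $R_j^c$ alone.

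The main step is exactness of the interval computation. I would fix $j$ and regard $g_j$ as a function of $\beta$ on $[-\betamax,\betamax]$. Here $d(\beta)$ is affine in $\beta$, so $A$ is quadratic, $B$ is affine, $C$ is constant, and $g_j$ is the exact minimum of the quadratic $\lambda\mapsto C+2\lambda B+\lambda^2A$ over $[0,1]$. That minimum is piecewise given by three branches: $C$ when $\lambda^*=0$, $A+2B+C$ when $\lambda^*=1$, and $C-B^2/A$ when $\lambda^*=-B/A$. Which branch applies changes only where $A$, $B$ or $A+B$ changes sign, because $-B/A\in(0,1)$ iff $B<0<A+B$ when $A>0$. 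Within one branch, the sign of $g_j$ changes only at a root of $C$ (a constant, hence no root in $\beta$), of $A+2B+C$, or of $AC-B^2$ (after multiplying by $A>0$). All of these polynomials have degree at most two, so there are finitely many critical points. Between consecutive critical points, including those pooled over all obstacles $j$, the sign of every $g_j$ is constant because $g_j$ is continuous. Testing one interior point per open interval, plus every critical point and both endpoints $\pm\betamax$, therefore determines $\Bs^0$ exactly as a finite union of closed intervals. With exact arithmetic no spurious roots or missed sign changes occur.

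Continuity of $g_j$ across branch switches, including the degenerate case $A=0$ where $\lambda^*=0$ and $d=0$, needs a brief argument. I would argue it directly: $g_j$ is the minimum of a jointly continuous function over the compact set $[0,1]$, so it is continuous. The degenerate point $A=0$ is itself a root of $A$ and is tested separately.

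Finally, I would link the returned union to the retry logic. With zero errors, if $\Bs^0$ is nonempty at the radius $1.05(r_{\rm ego}+r_j)$, it is returned. Otherwise the retry uses $R_j^c=r_{\rm ego}+r_j+10^{-4}$, the final declared radius, and the union is empty only if $\Bs^0$ at that radius is empty. Monotonicity in $R_j^c$, since a larger radius gives a smaller set, makes the first success consistent with "clears the declared radius". I expect the main obstacle to be the exhaustive critical-set argument, in particular showing that no sign change of $g_j$ hides strictly inside an interval at a branch boundary. The continuity argument handles this.
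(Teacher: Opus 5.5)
Your proposal is correct and follows the paper's own argument. Like the paper, you partition at the roots of $A$, $B$, $A+B$, $A+2B+C$ and $AC-B^2$ to capture every branch and sign change of $g_j$, and you use that the current displacement does not depend on $a$, so $\mathcal A(v)\ne\varnothing$ makes the set of clearing held inputs equal to $\mathcal A(v)\times\Bs^0$. You simply spell out in more detail the branch conditions, the continuity argument and the retry-radius bookkeeping that the paper leaves implicit.

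One small tightening is needed. In special configurations $B$, $C$ or $AC-B^2$ can vanish identically in $\beta$. Your claim of ``finitely many critical points'' should therefore refer only to roots of the polynomials that are not identically zero; an identically zero polynomial induces no branch or sign change.
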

\noindent\emph{Proof.} Root partitioning covers every sign and branch change,
including the boundary points; current position is independent of $a$.
Thus no admissible steering is omitted. $\square$

The double-precision implementation rechecks returned commands.

\subsection{Conditional braking margin invariance}
\begin{theorem}[Pairwise bicycle CBF]\label{thm:pairwise}
For one constant-velocity obstacle with $r_j^c\ge0.4$ m, the continuous
plant~\eqref{eq:plant}, $v\ge0.2$ m/s, and the symmetric input box,
Eq.~\eqref{eq:hard} is feasible on $\{h_j^B\ge0\}$ for
$0<a_B<0.0600$ m/s$^2$. A locally Lipschitz feedback satisfying this row
preserves the set while these domain and input conditions hold.
\end{theorem}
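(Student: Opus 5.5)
The plan is to split feasibility from invariance. For feasibility I will show that, at every state in $\{h_j^B\ge0\}$ satisfying the domain conditions, the maximum of the affine row~\eqref{eq:hard} over the symmetric box $[-a_{\max},a_{\max}]\times[-\betamax,\betamax]$ is nonnegative. Because the box is symmetric, this maximum equals
\[
|C_{j,a}|a_{\max}+|C_{j,\beta}|\betamax+L_fh_j^B+\alpha_Bh_j^B .
\]
Since $\alpha_Bh_j^B\ge0$ on the set, it suffices to prove
\[
|C_{j,a}|a_{\max}+|C_{j,\beta}|\betamax\ge -L_fh_j^B .
\]
The receding case $x_j\ge0$ is immediate: there $L_fh_j^B=x_j\ge0$, so $u=0$ already works. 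All the work is in the approaching case $x_j<0$, with the coefficients of~\eqref{eq:brake-row}.

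In that case $-L_fh_j^B=c_j-(c_j/a_B)y_j^2/d_j$. I will multiply through by $a_B/c_j>0$ and read the inequality as a radial-acceleration budget: $a_B$ must not exceed $y_j^2/d_j$ plus the achievable inward radial control acceleration. The centrifugal term $y_j^2/d_j$ is a help and can be dropped. The radial control acceleration contains two pieces:
\begin{itemize}
\item the longitudinal piece $|\cos\vartheta_j|a_{\max}$;
\item the yaw piece $(v^2\betamax/\ell_r)|\sin\vartheta_j|$.
\end{itemize}
The remaining pieces of $C_{j,\beta}$ are the direct-sideslip term $v\sin\vartheta_j$ and the cross term $-(c_jv/a_B)(y_j/d_j)\cos\vartheta_j$. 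For a lower bound, I will choose the sign of $\beta$ aligned with the yaw term, using $|X+Y|\ge|X|-|Y|$, or better, aligning signs where possible.

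The core scalar estimate is then
\[
\min_{\vartheta}\bigl(a_{\max}|\cos\vartheta|+\kappa|\sin\vartheta|\bigr)\ge\min(a_{\max},\kappa),\qquad \kappa=v^2\betamax/\ell_r .
\]
At the speed floor, $\kappa\ge v_{\min}^2\betamax/\ell_r=0.04\cdot0.302825/0.2\approx0.06057$ m/s$^2$, and $a_{\max}=5$. This explains the threshold $0.0600$: any $a_B<0.0600$ leaves a margin of at least $5.7\times10^{-4}$ m/s$^2$ to absorb the correction terms.

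The main obstacle is controlling the cross term $(y_j/d_j)\cos\vartheta_j$ and the sideslip term when $\sin\vartheta_j$ and $\cos\vartheta_j$ have unfavourable signs. My first attempt will split on $|\cos\vartheta_j|$.
\begin{itemize}
\item If $|\cos\vartheta_j|$ exceeds a small threshold $\epsilon$, the acceleration input alone contributes $(c_j/a_B)\epsilon a_{\max}$, which dominates $c_j$ whenever $\epsilon a_{\max}\ge a_B$. For example, $\epsilon=0.012$ already suffices, and then $\beta$ can be chosen only to avoid harm. Since $\beta$ enters linearly, taking $\beta$ with the sign of $C_{j,\beta}$ gives a nonnegative contribution.
\item If $|\cos\vartheta_j|<\epsilon$, the cross term is at most $\epsilon(|y_j|/d_j)$ times $c_jv/a_B$. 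I will bound $|y_j|\le v+\bar v_o$, and bound $d_j\ge r_j^c\ge0.4$ m using $h_j^B\ge0$. This is where the hypothesis $r_j^c\ge0.4$ enters. I expect the $\epsilon$-weighted loss to stay below the yaw gain $(v/\ell_r)|\sin\vartheta_j|\ge(v/\ell_r)\sqrt{1-\epsilon^2}$.
\item In the same regime, the sideslip term $v\sin\vartheta_j$ has the same sign as the yaw term, so it only helps.
\end{itemize}
If this case split is too loose near the numerical limit $0.0600$, I will instead bound the full expression directly as a function of $(\vartheta_j,v)$ over the compact domain and verify the minimum, keeping $y_j^2/d_j$ as an extra credit.

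For invariance, once the row is feasible on the set, I will check that $\partial h_j^B/\partial x\ne0$ on $\{h_j^B=0\}$ wherever $c_j>0$. The function $c_j^2$ is $C^1$ across $x_j=0$ because $[\,\cdot\,]_+^2$ is differentiable, so the two row branches agree. With a locally Lipschitz feedback satisfying~\eqref{eq:hard}, solutions exist and are unique. Along solutions, $\dot h_j^B\ge-\alpha_Bh_j^B$, and the comparison lemma gives $h_j^B(t)\ge h_j^B(0)e^{-\alpha_Bt}\ge0$. This is the standard CBF argument of~\cite{ames2017cbfqp}. It holds only as long as $v\ge0.2$ m/s and the input box remain valid, which is exactly the conditional clause of the statement.
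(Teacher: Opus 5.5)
Your skeleton matches the paper's proof. You reduce feasibility to the maximum of the row over the symmetric box and dispose of $x_j\ge0$ with zero steering. You split orientations at $|\cos\vartheta_j|=\chi=0.012$, use acceleration to supply $a_{\max}\chi=0.0600$ in the radial sector, use steering in the transverse sector, and finish with the comparison lemma. The gap is in the transverse sector $|\cos\vartheta_j|<\chi$. There you discard the centrifugal credit $y_j^2/d_j$ and bound the cross term $-(c_jv/a_B)(y_j/d_j)\cos\vartheta_j\,\beta$ using $|y_j|\le v+\bar v_o$. This fails in two ways.

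\emph{First, the speed bound is not a hypothesis.} The theorem assumes only a constant-velocity obstacle; $\bar v_o$ comes from Assumption~1 and the benchmark. Without it $|y_j|$ is unbounded. Once $y_j^2/d_j$ is dropped, nothing in your estimate stops the cross term from cancelling the yaw and sideslip terms for small nonzero $\cos\vartheta_j$. That leaves only $a_{\max}|\cos\vartheta_j|$, which can fall below $a_B$.

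\emph{Second, the estimate is too weak even with $\bar v_o=1.2$ m/s.} At the worst case $v=v_{\min}=0.2$ m/s, with $\chi=0.012$ and $R_{\min}=0.4$ m, your sector bound is
\[
\betamax\Bigl[\frac{v^2\sqrt{1-\chi^2}}{\ell_r}-\frac{v\chi(v+\bar v_o)}{R_{\min}}\Bigr]\approx0.0580\ \mathrm{m/s^2}<0.0600 .
\]
Rebalancing $\chi$ against $a_{\max}\chi$ only lifts the common threshold to about $0.0581$ m/s$^2$. So your case split certifies the implemented $a_B=0.055$, under an added hypothesis, but not the stated range $a_B<0.0600$. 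Your fallback does not rescue this. The domain is not compact, since $y_j$, $d_j$ and $c_j$ are unbounded, and a numerically verified minimum is not a proof.

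The missing step is to keep $y_j^2/d_j$ and complete the square in $|y_j|$ against the cross term. Take $\beta=\betamax$ with the sign of $\sin\vartheta_j$, so the yaw and sideslip terms are nonnegative, and use $d_j\ge r_j^c\ge R_{\min}$ on $\{h_j^B\ge0\}$. Then
\[
\frac{y_j^2}{d_j}-\frac{v\betamax|y_j||\cos\vartheta_j|}{d_j}\ge-\frac{v^2\betamax^2\cos^2\vartheta_j}{4d_j}\ge-\frac{v^2\betamax^2\chi^2}{4R_{\min}},
\]
which removes all dependence on $y_j$ and on obstacle speed. The transverse sector then supplies
$v_{\min}^2\bigl[\betamax\sqrt{1-\chi^2}/\ell_r-\betamax^2\chi^2/(4R_{\min})\bigr]\approx0.06056$ m/s$^2$. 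The bracket is positive, so $v=v_{\min}$ is the worst case. Together with $a_{\max}\chi=0.0600$ from the radial sector, this gives feasibility for every $0<a_B<0.0600$.

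Note that the threshold $0.0600$ is the radial-sector value $a_{\max}\chi$; the steering sector only needs to stay above it. Your invariance argument via the comparison lemma is fine as written.
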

\noindent\emph{Proof.} For $c_j>0$, maximize $\dot h_j^B/c_j$ over the
input box. Split orientations at $|\cos\vartheta_j|=\chi$: acceleration
acts where the heading has sufficient radial projection, and steering
acts near transverse orientations. In the first sector, acceleration
supplies $a_{\max}\chi$. In the other, steering and completion of the square in
$|y_j|$ supply
$v_{\min}^2[\betamax\sqrt{1-\chi^2}/\ell_r-
\betamax^2\chi^2/(4R_{\min})]$.
With $R_{\min}=0.4$ m, taking $\chi=0.012$ gives lower bounds $0.0600$ and $0.06056$ m/s$^2$.
Both exceed $a_B=0.055$, so the maximal margin derivative is positive.
The proof parameter
$\chi$ balances these two lower bounds; it is absent from the controller
and is determined here from the vehicle limits.
When $x_j\ge0$, zero steering gives $\dot h_j^B=x_j\ge0$.
The CBF comparison argument establishes invariance. $\square$

The bound ties the braking parameter to the bicycle's input limits,
including its least favorable low-speed orientations. The controller
also imposes next speed limits and multiple obstacle rows; Section~V
evaluates their joint feasibility.

\subsection{Optional continuation after task termination}
Committed continuation follows gatekeeper and backup CBF
principles~\cite{gatekeeper,backupcbf}. State $z$ includes time,
vehicle/actuator state, queued commands and hold phase. A valid plan $W$
satisfies input and collision constraints through its maneuver and an
invariant terminal mode.

\begin{proposition}[Committed safety, optional mode]\label{prop:optional}
From a valid $(z,W)$, accept an input only after committing its verified
execution and a valid successor plan. On rejection or task termination,
execute the saved plan. Validity is preserved under the verifier's bounds
if unexecuted suffixes remain available and replacements match their time
and execution-state anchors.
\end{proposition}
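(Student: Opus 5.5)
The plan is to prove Proposition~\ref{prop:optional} by induction on acceptance and execution events. The invariant is that the current pair $(z,W)$ is \emph{valid}: executing $W$ from $z$ meets the input and collision constraints over its maneuver and then enters an invariant terminal mode. The base case is the hypothesis that the initial $(z,W)$ is valid. Validity of $(z,W)$ already implies safety of every continuation that follows $W$, because the terminal mode is invariant and the maneuver prefix is verified. The only work is therefore to show that every transition of the protocol maps a valid pair to a valid pair.

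The transitions fall into three cases.

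\emph{Acceptance.} A candidate input is accepted only after the verifier has checked two things: its execution from $z$, and a successor plan $W'$ anchored at the post-execution state $z'$. These checks use the verifier's stated error bounds, in the spirit of Theorem~\ref{thm:sample}. Within those bounds the realized $z'$ lies in the verified set, so $(z',W')$ is valid by construction.

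\emph{Rejection.} No new command is committed, and the controller executes the next element of $W$. The successor pair is $(z^+,\sigma W)$, where $\sigma W$ is the unexecuted suffix. It is valid because a suffix of a valid plan, started from the state reached along that plan, still satisfies the same constraints and ends in the same invariant terminal mode. This is where the hypothesis that unexecuted suffixes remain available is needed: queued commands and hold phase are part of $z$, so the suffix is well defined.

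\emph{Termination.} This is repeated rejection. The controller follows $W$ to its terminal mode, and invariance of that mode extends safety for all later time.

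The main obstacle I expect is making the acceptance step rigorous in the presence of timing and actuator state. A replacement plan must be spliced in at exactly the time and execution-state anchor the verifier assumed. Otherwise queued commands or a partially elapsed hold phase could make the realized trajectory differ from the verified one. I would handle this by defining a replacement as valid only relative to the anchor $(t,\text{queue},\text{phase})$ contained in $z$. I would then argue that the commit rule installs $W'$ only when that anchor matches the realized state, which is the stated anchor-matching hypothesis. If the anchor mismatches, the replacement is discarded and the saved plan continues, which falls back to the rejection case.

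Combining the three cases shows that validity is an invariant of the protocol. The collision constraints inside each valid plan then give safety at all times. The conclusion holds only under the verifier's bounds, which is the same conditional form as Theorem~\ref{thm:sample}.
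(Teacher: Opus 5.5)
Your proposal is correct and follows the paper's own argument. The paper's one-line proof has the same structure: validity passes forward through either a verified, anchor-matched replacement or the saved suffix, and the invariant terminal mode carries safety past termination. Your induction over acceptance, rejection and termination events simply spells this out, and your anchor-mismatch fallback to the saved plan covers the paper's remark that the suffix also spans computation time.
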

\noindent\emph{Proof.} A valid replacement or the saved suffix covers
execution, including computation; its invariant terminal mode continues
safety. $\square$

\section{Evaluation}
We test sustained passage, guidance mechanisms and operating range.
\begin{figure*}[t]\centering
\includegraphics[width=.98\textwidth]{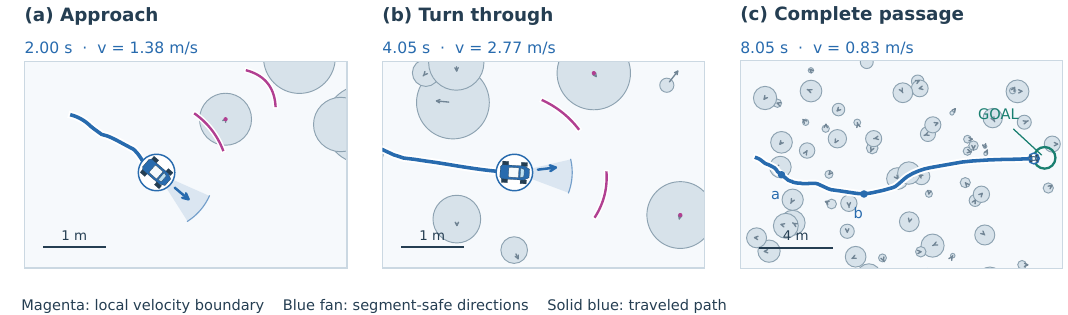}
\caption{Local decisions build a passage through 100 moving obstacles.
(a,b) Enlarged views show the car, motion and guidance. (c) The path reaches
the goal at 8.05 s. Blue gives segment-safe directions and the traveled
path. Magenta shows up to three nearest eligible soft rows, evaluated at
the displayed state (0.22 m per m/s).
Figure~\ref{fig:primary} tests full-cohort passage.}
\label{fig:passage}\end{figure*}

\subsection{Protocol and direct comparisons}
Each controller faces 500 shared seeds at each
$N\in\{10,40,60,80,100\}$, totaling 2500 scenarios. The ego starts at
$(0,0,0,0.5)$ and has 60 s to enter a 0.6 m goal ball around $(16,0)$.
Obstacle centers, radii, headings and speeds are uniform on
$[2,14]\times[-7,7]$ m, $[0.1,0.7]$ m, $[0,2\pi)$ and $[0,1.2]$ m/s.
We independently generate denser DPCBF-inspired scenes in 168 m$^2$.
At fixed obstacle count, initial density is 10.7 times that of our
1800 m$^2$ DPCBF-style reconstruction. Obstacles subsequently disperse
on an unbounded plane.

We first compare one-step CBF-QP designs (MAVO, DPCBF and rFF-CBF),
then predictive controllers.
All adaptations share the plant, goal, input bounds, observations and
whole segment contact evaluator. Our Python MAVO port follows the
published braking/soft-VO formulation and disc time-to-collision weighting.
Weight 100 is selected on separate development scenarios. The public MATLAB
car example differs in steering, goal policy and parameters. DPCBF calls
the authors' \texttt{safe\_control} analytic barrier through the shared
interface. The rFF-CBF port follows the relaxed future-focused
formula~\cite{ffcbf}, with a 5 s closest-approach window and CBF gain 10.
Sources and configurations accompany the code; SCET retains its weights.
Failed MAVO/SCET solves invoke checked hard-domain
projection. DPCBF and rFF-CBF solve polygon QPs directly; MPC returns no command.

The original DCBF-MPC uses ten steps (0.5 s), decay 0.8 and 250 IPOPT iterations,
without a binding wall-time limit. Its stage weights are
$Q=\operatorname{diag}(50,50,1,2)$, $R=\operatorname{diag}(0.5,2)$;
terminal weight is $2Q$, with no terminal constraint. The reference is
$(16,0,0,1)$; shifted previous solutions replace the initial zero-input
warm start. SCET uses OSQP tolerances $10^{-7}$.
A 250-world comparison with matched IPOPT tolerances and iteration budgets
gives 190/105 SCET/MPC completions; SCET also completes 190 with OSQP.
Comparators process the nearest ten obstacles within 15 m; SCET adds
urgent obstacles for coverage. Restricting all SCET channels to the same ten
obstacles gives 1829/2500 completions.

A separate 200-world study tests 10/20/40-step horizons and increased
solve effort. It selects a 20-step (1 s) MPC with 1000 iterations and two
deterministic turn/brake initializations after failure, denoted $H20+R$.
The configuration is fixed before the 2500-world comparison, retaining
the original cost, dynamics, observations and decay.
TC-MPC transfers the authors' two-circle geometry and relaxable prediction
rows to the common bicycle and $H20+R$ solver. On 20 separate development
worlds, six hard/soft and decay settings select soft rows with decay
$1-\sqrt{0.95}$; linear and quadratic slack weights are $10^6$.
Native author-source unicycle runs independently recover passage in
static, head-on and overtaking cases.

Serial timing on ten shared worlds per method (Apple M4, one BLAS
thread) gives full-call P50/P95 of 1.77/2.11 ms for SCET, 13.46/25.59 ms
for $H10$ MPC, 57.49/125.57 ms for $H20+R$, and 51.39/124.79 ms for
TC-MPC, excluding initialization.
Differences use percentage points (pp). Brackets give pointwise 95\%
intervals from 20000 whole-seed bootstrap draws, keeping five densities
together. Exact seed-label tests use Holm correction across ten primary
controller/deadline contrasts.

\subsection{Sustaining passage as obstacle density increases}
\paragraph{One-step CBF-QP navigation}
SCET completes 72.8\%, followed by MAVO at 49.8\%, rFF-CBF at 30.6\%
and DPCBF at 23.6\% (Table~\ref{tab:outcomes}). The SCET-minus-MAVO difference
is +23.0 pp [20.7,25.2]. At $N=100$, the four methods complete
262/73/6/8 of 500 tasks (Fig.~\ref{fig:primary}a).
The separation grows as encounters become denser.
Of SCET's successes, 89.4\% arrive by 30 s; median arrival is
14.8 s (interquartile range 10.45 to 22.0 s).

\begin{table}[t]\centering\footnotesize
\caption{One-step CBF-QP methods: 2500 paired scenarios.
S: goal; C: contact before termination; E: no executable command, including
failed solves; T: task timeout. Both deadlines evaluate the same trajectories.}
\label{tab:outcomes}\setlength{\tabcolsep}{4.4pt}
\begin{tabular}{lrrrrr}\toprule
&30 s&\multicolumn{4}{c}{60 s}\\
Controller&S&S&C&E&T\\\midrule
SCET-CBF &1627&1820&0&670&10\\
MAVO &1134&1246&0&1182&72\\
rFF-CBF adapted &762&764&8&1728&0\\
DPCBF analytic &458&589&15&1893&3\\
\bottomrule\end{tabular}\end{table}

\paragraph{Comparison with predictive control}
Enhanced MPC raises original MPC completion from 45.0\% to 69.8\%
(Table~\ref{tab:predictive}). SCET gains +3.0 pp [0.8,5.2] overall and
+18.8 pp [13.0,24.6] at $N=100$ (262 versus 168 completions).
At 30 s, SCET/enhanced MPC complete 1627/1745.
TC-MPC reaches 73.5\% with 646 contact terminations; its completion
difference from SCET is 0.7 pp [$-1.5$,2.8]. SCET protects every returned segment.

\begin{table}[t]\centering\footnotesize
\caption{SCET and predictive control: the same 2500 scenarios,
outcomes and deadlines as Table~\ref{tab:outcomes}.}
\label{tab:predictive}\setlength{\tabcolsep}{4.4pt}
\begin{tabular}{lrrrrr}\toprule
&30 s&\multicolumn{4}{c}{60 s}\\
Controller&S&S&C&E&T\\\midrule
SCET-CBF &1627&1820&0&670&10\\
MPC, $H10$ &1126&1126&0&1374&0\\
MPC, $H20+R$ &1745&1745&0&754&1\\
TC-MPC adapted &1556&1837&646&0&17\\
\bottomrule\end{tabular}\end{table}

\begin{figure}[t]\centering
\includegraphics[width=\columnwidth]{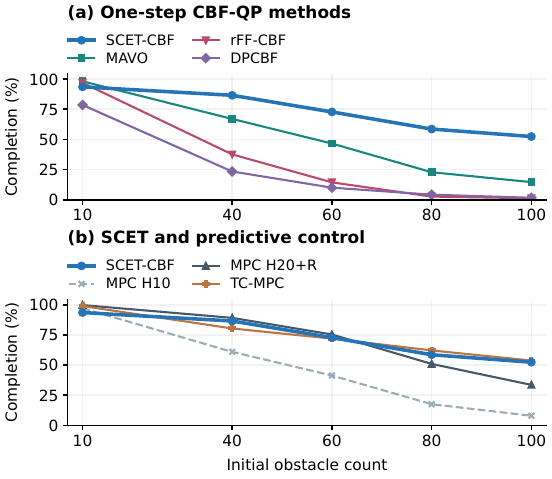}
\caption{Passage through denser encounters. (a) One-step CBF-QP designs.
(b) SCET and predictive control. Both use 500 shared scenarios per obstacle
count and a 60 s deadline; rFF and TC use the common-bicycle adaptations.}
\label{fig:primary}\end{figure}

On 1390 shared successes, median minimum clearance is 0.168/0.028 m
and steering variation 5.27/48.21 rad for SCET/enhanced MPC.
MPC arrives sooner (4.75 versus 13.50 s) along shorter paths; full quality
and complementary coverage appear in the technical companion. Enhanced MPC's 754 E
outcomes include 694 infeasibility reports and 60 other solver exits.
S/E end navigation; Sec.~V-F examines subsequent motion.

\subsection{Which guidance choices improve passage?}
Guiding the available commands produces the main gain. Segment protection
alone completes 224 primary tasks, adding braking gives 295, and adding
guidance gives 1820. With guidance, braking adds +5.1 pp [2.8,7.3] over
its removal (1693 successes). We next fix the hard layer and examine guidance.

\begin{figure*}[t]\centering
\includegraphics[width=.98\textwidth]{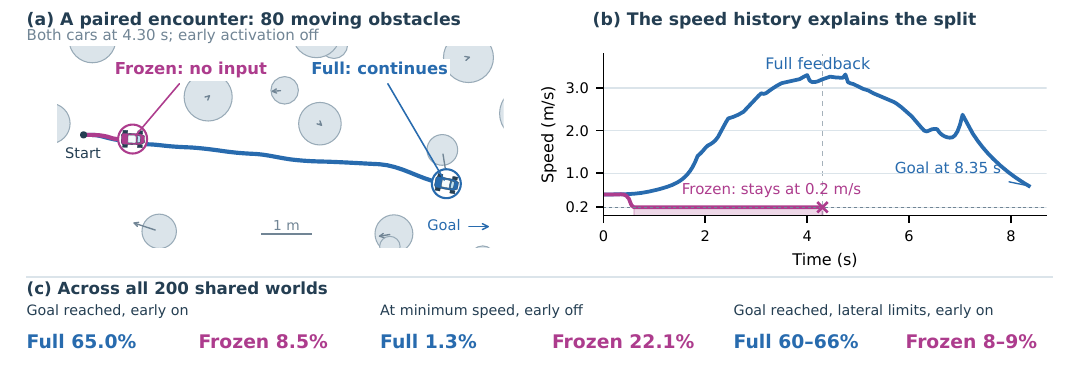}
\caption{Speed feedback changes the selected motion. Full retains the
allowance's speed derivative; frozen suppresses it at the same allowance
value for a given state. (a,b) One paired world with early activation off:
positions are synchronized at 4.30 s; speed histories end at each arm's
termination. (c) Results over all 200 shared worlds. Minimum-speed
occupancy is the mean per-world fraction of queries at the speed floor.
The ranges cover lateral limits of $0.1g$, $0.25g$ and $0.5g$.}
\label{fig:guidance}\end{figure*}

\paragraph{Allowance from turning capacity}
Replacing SCET's frontal rule by the reproduced DPCBF implementation's
$\nu_D=(0.5\,\mathrm{m/s})D_\epsilon\sqrt{1.05^2-1}/r^c$, where
$D_\epsilon=\sqrt{d^2-(r^c)^2+\epsilon^2}-\epsilon$ and $\epsilon=0.1$ m,
gives 37 versus 364 completions on 500 paired worlds: +65.4 pp
[61.6,69.2] for turning calibration. This intervention replaces the frontal
rule and its derived feedback and activation quantities at fixed weights.
On 200 worlds without early activation, simpler allowances
$kD_\epsilon$, $k=0.25,0.5,1,2,4$ s$^{-1}$, complete 31/35/36/31/29;
$kv$, $k=0.5,1,2,4,8$, complete 10/31/24/23/22, versus calibrated 74.

\paragraph{Feedback and early activation}
The frozen variant evaluates the same allowance at a given state and
disables its ego speed derivative. Figure~\ref{fig:guidance}a,b shows the
result in one encounter: full feedback builds speed and passes through;
the frozen car stays slow until no command is returned.
Across 200 shared worlds, full/frozen feedback completes
74/31 worlds with early activation off and 130/17 with it on:
differences +21.5 pp [16.0,27.0] and +56.5 pp
[50.0,63.5]. Mean per-world speed-floor occupancy rises from 1.3\% to
22.1\% when feedback is frozen with early activation off, linking lost
passage to prolonged low-speed operation.

The distance allowance has no ego speed sensitivity and gives identical
full/frozen outcomes: 29/29 with early activation off and 10/10 with it on. Early activation improves
the full turning arm from 74 to 130; its feedback interaction is +35.0 pp
[26.5,44.0]. The derivative-dependent advance is recomputed in each cell,
so the default gain includes this coupling.

\subsection{Guidance under lateral-acceleration limits}
Guidance should follow the response the vehicle can use.
We test saturation by imposing the steady turning bound~\cite{polack2018}:
\begin{equation}
\frac{v^2|\beta|\sqrt{1+\beta^2}}{\ell_r}\le a_{n,\max}.
\label{eq:cap}
\end{equation}
This defines a speed-dependent steering bound $\bar\beta(v)$. SCET uses
it in both the hard set and the turning proxy, including its derivative.
As the cap becomes active, $a_{\rm eff}$ and its allowance approach
saturation. The guidance therefore tracks the reduced turning response.
The six controllers in Table~\ref{tab:motion} receive the same bound. At $0.1g/0.25g/0.5g$,
full/frozen feedback completes 132/18, 131/16 and 120/17 of 200 worlds
(Fig.~\ref{fig:guidance}c). Enhanced MPC completes 80/106/132;
without a cap, SCET/enhanced MPC complete 130/144.
Keeping SCET's original uncapped allowance gives 83/107/125 under
the three caps: matching the allowance to the available response adds
24.5/12.0/$-2.5$ pp. The first two gains retain Holm-adjusted $p<0.05$.
At $0.1g$, the cap narrows steering on \CapRestricted\% of SCET's executed
steps. Predictive methods impose it at each predicted state using signed,
normalized inequalities.

\subsection{Bounded motion and execution errors}
Changing obstacle motion tests imperfect prediction. Step~1 adds known
ego and obstacle displacement bounds to its radius over the whole sample.
In one law, Cartesian accelerations are uniform on
$[-0.4,0.4]$ m/s$^2$ after control, with velocity projected onto the
1.2 m/s ball before position advances; the segment bound is
$\sqrt2(0.4)\dt^2$ m. In the other, velocity redraws within that ball occur
at rate 0.25 s$^{-1}$, giving $2(1.2)\dt$ m. All realized segments are checked.
Each of six methods runs with original radii and the same displacement
inflation on 200 shared worlds per law. Table~\ref{tab:motion} shows the
inflated arms. SCET completes 120/100 tasks with zero recorded contacts;
enhanced MPC completes 69/101. The acceleration-law difference is
25.5 pp [17.5,33.0]; the jump-law difference is $-0.5$ pp [$-8.0$,6.5].

\begin{table}[t]\centering\footnotesize
\caption{Changed obstacle motion: 200 paired scenarios per law, 60 s. Each method uses the same one-step displacement bound in its obstacle radius. Entries are S/C/E/T.}
\label{tab:motion}\setlength{\tabcolsep}{4pt}
\begin{tabular}{lrr}\toprule
Controller&Acceleration&Velocity jumps\\\midrule
SCET-CBF & 120/0/80/0 & 100/0/100/0\\
MAVO & 104/0/96/0 & 52/0/148/0\\
MPC, $H10$ & 49/0/151/0 & 66/0/134/0\\
MPC, $H20+R$ & 69/0/131/0 & 101/0/99/0\\
TC-MPC adaptation & 138/62/0/0 & 107/91/0/2\\
DPCBF analytic & 38/0/161/1 & 21/0/179/0\\
\bottomrule\end{tabular}\end{table}

Actuator dynamics delay the commanded response. Four profiles combine a 1 rad/s
wheel-rate limit, 20 m/s$^3$ jerk limit, 100 ms first-order lag and 50/100 ms command delay
(Table~\ref{tab:actuator}); each uses the same 100 worlds.
The execution plant has $\dot p=v e_{\theta+\beta_r}$,
$\dot\theta=v\sin\beta_r/\ell_r$ and
$\beta_r=\arctan(\tfrac12\tan\delta)$. Servos are integrated analytically,
with ten RK4 pose substeps per sample. Known servo parameters and queued
commands give endpoint ranges bounding $|\beta_r|\le B$ and $|\beta_r-\beta_c|\le E$.
For $h=\dt$ and $\Omega=\min(v_{\max},v_0+a_{\max}h)\sin B/\ell_r$,
integrating planning/execution velocity error gives
\begin{equation}
\rho_{\rm ego}=\tfrac12a_{\max}h^2+
 v_0(\tfrac12\Omega h^2+\tfrac12B^2h+Eh).
\label{eq:actuator-tube}
\end{equation}
Across 400 paired profile/world executions, including SCET's path bound
reduces contacts from 50 to zero, with completion changing from 179 to 167.
Its largest
path-error/bound ratio is 0.914. The six-method comparison uses identical
profiles and bound information. Whole-seed intervals retain all five
densities and four profiles together.

\begin{table}[t]\centering\footnotesize
\caption{Execution errors: 100 shared scenarios across four actuator configurations (400 runs per arm), 60 s. Entries are S/C/E/T.}
\label{tab:actuator}\setlength{\tabcolsep}{4pt}
\begin{tabular}{lrr}\toprule
Controller & Nominal radius & With path bound\\\midrule
SCET-CBF & 179/50/170/1 & 167/0/232/1\\
MAVO & 85/32/283/0 & 82/14/304/0\\
MPC, $H10$ & 24/16/354/6 & 15/0/377/8\\
MPC, $H20+R$ & 23/37/335/5 & 18/0/376/6\\
TC-MPC adapted & 182/214/0/4 & 192/199/0/9\\
DPCBF analytic & 97/5/298/0 & 99/1/300/0\\
\bottomrule\end{tabular}\end{table}

\paragraph{Optional post-task continuation}
The optional continuation retains a safety plan after task termination.
On 200 separate worlds with global constant-velocity predictions, the
ideal plant and zero computation delay, 49 short turn/speed maneuvers
connect to verified straight tails (Sec.~IV-C). All starts are admitted;
91 reach the goal before the first required fallback, and all runs retain
a valid continuation.

\subsection{Limitations and failure analysis}
The primary 670 SCET exhaustions comprise 513 empty steering sets and
157 hard-intersection losses. Continuing with straight braking to the
speed floor gives 651 later contacts under the constant-velocity model;
reaching E supplies no stopping certificate. The primary braking margin is nonnegative on 13.7\% of executed steps
and a braking row binds on 76.7\%. Outside that margin set it requests
recovery. Theorem~\ref{thm:pairwise} concerns a continuous pairwise system;
at the speed floor, its symmetric-box braking choice can conflict with
$a\ge0$, even for a single obstacle. Segment protection supplies the
sampled execution check.

At $N=10$, SCET/MAVO/MPC $H10$ complete 468/491/486; removing SCET's braking
row raises its count to 497 and reduces $N=100$ completion from 262 to
171. The braking layer therefore has a density-dependent cost and benefit.
Exact and tangent steering constructions with the same fixed 5\% buffer
complete 178/179 of 250 worlds, showing no completion gain from exactness.
The turning proxy shapes a soft cost: for static frontal encounters,
the zero set of $h/v$ is independent of speed. The near-obstacle $h=-1$ branch has zero input sensitivity; its
unsmoothed join can produce large numerical derivatives.
At $v=v_{\min}$, writing $\kappa=s_0/v$, the frontal central difference
is $\kappa/2-1$; the right derivative is $\kappa-1$.

MPC termination statuses describe local solver outcomes. In the three
capped TC arms, 439 of 461 E outcomes reach the iteration limit.
The common single-step inflation at prediction nodes provides no
full-horizon robust tube. Primary runs exclude computation delay;
enhanced MPC and TC exceed 50 ms in 632/894 and 1455/2709 timed calls.
Lateral limits, motion errors and actuator response are evaluated separately.
Tire transients, a shared friction budget, uncertain sensing
and uncalibrated servos require validation on a measured dynamic plant.
Future continuation certificates require predictions valid over their
entire execution. Goal entry and exhaustion issue no parking command.

\section{Conclusion}
SCET-CBF makes a one-step QP account for braking's two effects through a
turning-based soft allowance. Paired studies link improved dense passage
to speed feedback and early activation. Segment checks protect accepted
motion; lateral limits and bounded errors define the tested range.
Future work includes physical tests on an Ackermann robot, measured
steering dynamics, richer obstacle prediction and safe continuation.

\section*{Acknowledgments}
ChatGPT and Claude assisted text and presentation refinement (I to VI),
algebra verification and code refinement (III to V), and statistical and plotting script
improvements (V). The authors revised and reviewed the text, proofs,
code, records, and references.

\end{document}